\theoremstyle{plain}
\newtheorem{theorem}{Theorem}[section]
\newtheorem{lemma}[theorem]{Lemma}
\theoremstyle{definition}
\theoremstyle{remark}
\icmltitlerunning{Differentiable Combinatorial Scheduling at Scale}
\newcommand{\fixme}[1]{\textcolor{red}{#1}}
\newcommand{\zz}[1]{\textcolor{blue}{\small [zz: #1]}}
\begin{document}

\twocolumn[
\icmltitle{Differentiable Combinatorial Scheduling at Scale}



\icmlsetsymbol{equal}{*}
\icmlsetsymbol{correspond}{}

\begin{icmlauthorlist}
\icmlauthor{Mingju Liu}{equal,umd}
\icmlauthor{Yingjie Li}{equal,correspond,umd}
\icmlauthor{Jiaqi Yin}{umd}
\icmlauthor{Zhiru Zhang}{cornell}
\icmlauthor{Cunxi Yu}{umd,correspond}
\end{icmlauthorlist}

\icmlaffiliation{umd}{University of Maryland, College Park}
\icmlaffiliation{cornell}{Cornell University}

\icmlcorrespondingauthor{Yingjie Li}{yingjiel@umd.edu}

\icmlkeywords{Machine Learning, ICML}

\vskip 0.3in
]

\printAffiliationsAndNotice{\icmlEqualContribution} %


\begin{abstract}
This paper addresses the complex issue of resource-constrained scheduling, an NP-hard problem that spans critical areas including chip design and high-performance computing. Traditional scheduling methods often stumble over scalability and applicability challenges. We propose a novel approach using a differentiable combinatorial scheduling framework, utilizing Gumbel-Softmax differentiable sampling technique. This new technical allows for a fully differentiable formulation of linear programming (LP) based scheduling, extending its application to a broader range of LP formulations. To encode inequality constraints for scheduling tasks, we introduce \textit{constrained Gumbel Trick}, which adeptly encodes arbitrary inequality constraints. Consequently, our method facilitates an efficient and scalable 
scheduling via gradient descent without the need for training data. Comparative evaluations on both synthetic and real-world benchmarks highlight our capability to significantly improve the optimization efficiency of scheduling, surpassing state-of-the-art solutions offered by commercial and open-source solvers such as CPLEX, Gurobi, and CP-SAT in the majority of the designs.
\end{abstract}

\section{Introduction}
\label{sec:introduction}

Nowadays, the computer-aided scheduling techniques have been widely used in various tasks, such as computing \cite{cong2006efficient, floudas2005mixed, davis2011survey, dhall1978real,steiner2022olla,kathail2020xilinx,babu2021xls}, operations research \cite{kolisch1997psplib, laborie2018ibm, hartmann2022updated}, automated systems \cite{booth2016constraint, booth2016mixed, schmitt2016perception, tran2017robots}, transportation \cite{cappart2017rescheduling, gedik2017constraint, kinable2016optimization}.  Scheduling plays a crucial role in optimizing time, resources, and productivity, leading to better outcomes and improved efficiency. For example, in the context of computing systems, scheduling is a critical step in computing systems, ensuring optimal performance in hardware synthesis by efficiently allocating resources and timing, and in compilers by determining the sequence of operations to optimize code execution and resource usage.  

However, resource- or time-constrained scheduling is a known \textit{NP-hard} problem. Despite an extensive body of prior research and development on either exact or heuristic-based scheduling methods, 
contemporary scheduling approaches still have major limitations: 

(1) \textbf{Unfavorable speed-quality trade-off:} 
Many constrained scheduling problems can be solved exactly using integer linear programming (ILP) \cite{hwang1991formal,floudas2005mixed,steiner2022olla,yin2022exact}, satisfiability (SAT)~\cite{steiner2010evaluation, zhang2004sat, coelho2011multi}, or {constraint programming} (CP) formulations~\cite{christofides1987project,laborie2018ibm,baptiste2001constraint,cesta2002constraint}. 
%
However, these approaches suffer from limited scalability. Conversely, popular heuristic methods \cite{ahn2020ordering,paulin1989force,graham1969bounds,blum2003metaheuristics,brucker1998branch} often yield suboptimal results while achieving feasible run times. Notably, a heuristic method based on system of difference constraints (SDC) 
 provides an efficient formulation to encode a rich set of scheduling constraints in SDC and expresses the optimization objective in a linear function that can be solved as an LP problem~\cite{cong2006efficient, dai2018scalable}. 

(2) \textbf{Insufficient utilization of modern parallel computing devices:} Existing scheduling algorithms and solvers are primarily designed for single-threaded CPU execution and are unable to exploit modern parallel computing devices like GPUs \cite{sanders2010cuda} and TPUs \cite{jouppi2017datacenter}. 


Recently, machine learning (ML) has been used for combinatorial scheduling for compiler and hardware synthesis to improve its runtime efficiency and explore the expanded decision space~\cite{bengio2021machine, yu2018developing,yu2019painting, neto2022end, wu2023gamora}. There are mainly two categories: \textit{imitation learning}~\cite{baltean2018strong, gasse2019exact, gagrani2022neural, wang2023cardinality}, where the policy is learned through supervised targets while suffering from difficult data collection and poor model generalizability; \textit{reinforcement learning}~\cite{mascia2014grammar, karapetyan2017markov, chen2019RLS, yin2023respect, yin2023accelerating,yu2020flowtune,neto2022end}, where the policy is learned from the rewards and potential to outperform the current policy with new discoveries while suffering from limited problem scalability and significant runtime overhead. 


In this work, we introduce a scalable approach to differentiable combinatorial scheduling based on SDC formulations 
employing Gumbel-Softmax \cite{jang2016categorical} for the differentiation of scheduling variables and crafting constraints as differentiable distributions for variable discretization. In contrast to existing learning-based approaches, this allows for the customization of objective functions, as well as models the optimization problem of scheduling as a stochastic optimization problem that can be optimized without training and labeled data collection. 
As a result, our approach introduces an auto-differentiation process for solving combinatorial scheduling without model training. 
This new approach distinguishes itself from conventional methods by its ability to scale global optimization through parallel computing resources. Moreover, the proposed technique seamlessly integrates with existing ML frameworks like PyTorch, ensuring fast and practical implementation. Our experimental results demonstrate significant improvements in optimization efficiency over state-of-the-art (SOTA) methods solved with commercial solvers CPLEX \cite{cplex2009v12}, Gurobi \cite{gurobi} and open-source CP-SAT solver \cite{cpsatlp, perron2023cp}. Our experimental setups and implementations are available at \url{https://github.com/Yu-Maryland/Differentiable_Scheduler_ICML24}. 

\section{Preliminary}
\label{sec:preliminary}

\subsection{Scheduling and Problem Formulation}
\label{subsec:schedule}


Scheduling is one of the most extensively studied combinatorial problems with a wide range of real-world applications. This work focuses on scheduling a dataflow graph, with the input represented as a directed acyclic graph (DAG) \(G(V,E)\). 
In the domain of computing systems, these graphs consist of nodes \(V\), representing tasks that execute specific computations such as arithmetic, logical operations, or ML operators. The edges \(E\) represent the flow of data between these nodes. Additional cost metrics can be associated with the nodes and/or edges of the graph in the form of weights. Moreover, a set of scheduling constraints, such as timing constraints and resource constraints, are often specified as part of the formulation, depending on the target scheduling problem. The goal of the optimization is to generate a schedule $S = s_0,s_1,...s_i$, $i \leq |V|$, where $s_i$ represents the scheduled stage of node $v_i$, in order to satisfy the given constraints while minimizing or maximizing an objective which is a function of $S$. 



The targeted scheduling in this work is defined as follows: {\textbf{Given} a DAG $G(V,E)$, where $V$ is the list of nodes to be scheduled, each associated with a per-node resource cost, and $E$ are weighted edges capturing dependency constraints and edge costs. Latency $L$ is the time-to-completion for the entire graph, representing the time between the initiation and completion of the computational task captured by the DAG. The \textbf{objective} is to optimize the schedule w.r.t the dependency constraints under a given latency $L$ while minimizing the cost. In other words, we are solving a \underline{latency-constrained min-resource scheduling}.} 


\paragraph{System of Difference Constraint (SDC) --} An SDC is a system of difference constraints in the integer difference form, denoted as $x_i-x_j\leq c_{ij}$, where $c_{ij}$ is an integer constant, and $x_i$ and $x_j$ are discrete variables. SDC scheduling has been deployed in multiple commercial and open-source high-level synthesis (HLS) tools, such as AMD Xilinx Vivado/Vitis HLS \cite{kathail2020xilinx,cong2011high} 
and Google XLS \cite{babu2021xls}. An SDC is feasible if there exists a solution that satisfies all inequalities in the system. Due to the restrictive form of these constraints, the underlying constraint matrix of SDC is totally unimodular~\cite{camion-tum-ams1965}, enabling the problem (feasibility checks or optimization) to be solvable in polynomial time with LP solving while ensuring integral solutions. These constraints can be incorporated with a linear objective to formulate an optimization problem, which is leveraged in this work to handle the dependency constraints (Section \ref{sec:approach}). 


\begin{figure}[h]
\includegraphics[width=0.95\linewidth]{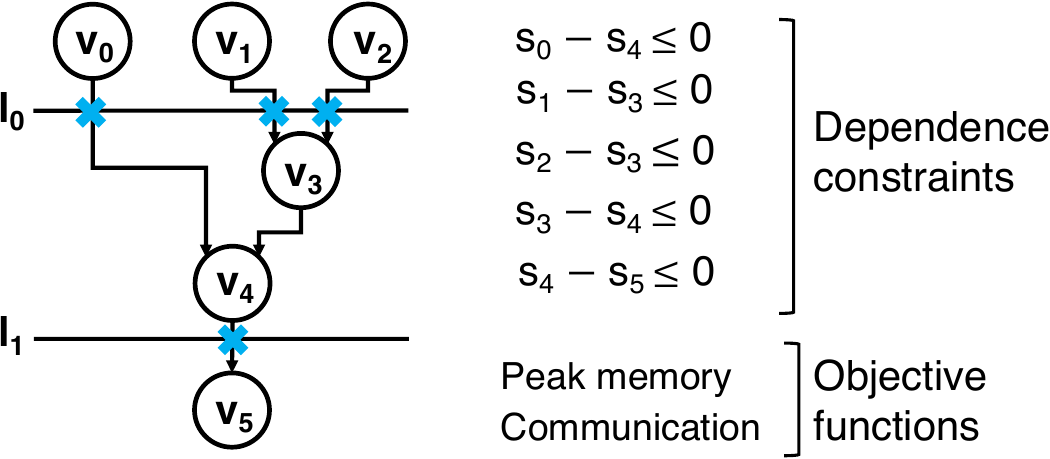}
\centering
\caption{Example of SDC-based scheduling --- (left) A DFG with two schedule stages $l_{0}$ and $l_{1}$ with latency $L=3$; (right) Dependence constraints and objective functions including peak memory minimization and inter-stage communication minimization (the blue crosses)} 
\label{fig:sdc-example}
\end{figure}

We illustrate the SDC-based scheduling formulation with a simple data flow graph (DFG) in Figure~\ref{fig:sdc-example}. To manage the \underline{dependencies}, SDC establishes a difference constraint for each data edge from operation $i$ to operation $j$ within the DFG, denoted as $s_i - s_j \leq 0$. In our example, since there is an edge from node $v_0$ to node $v_4$, SDC introduces the difference constraint $s_0 - s_4 \leq 0$, ensuring that $v_4$ is scheduled no earlier than $v_0$. Similar constraints are formulated for other data-dependent edges. In this work, we leverage SDC formulation with a new technique that implements a fully differentiable SDC to handle dependency constraints in scheduling.

\paragraph{Constraint Programming (CP) --} CP is a paradigm for solving combinatorial problems and is an effective method for addressing scheduling problems by allowing both discrete variables and non-linear constraints \cite{laborie2018ibm}. Unlike LP, which focuses on optimizing a linear objective function and requires constraints to be linear, CP is based on feasibility (finding a feasible solution) rather than optimization (finding an optimal solution). It focuses on the constraints and variables rather than the objective function, which leads to its superiority in managing complex and logical constraints. This makes it ideal for loosely constrained discrete sequencing problems with disjunctive constraints. For example, CP is used to solve the problem of execution time minimization of compute graphs subject to a memory budget~\cite{laborie2018ibm,bartan2023moccasin}. However, while CP provides significant flexibility and powerful constraint satisfaction capabilities, it can also face challenges with scalability and efficiency.

\paragraph{Learning-based Scheduling --} 

ML approaches have been explored for combinatorial scheduling, particularly in compiler optimization and hardware synthesis, to enhance the Pareto frontier of runtime and quality. Topoformer~\cite{gagrani2022neural} introduces a novel attention-based graph neural network architecture for topological ordering, focusing on learning embeddings for graph nodes. While Topoformer has provided significant insights and demonstrated potential in leveraging ML for scheduling, its generalizability and scalability heavily depend on the availability and volume of data. 
Conversely, reinforcement learning (RL) with graph learning-based schedulers~\cite{chen2019RLS,yin2023respect,yin2023accelerating} aims to improve scalability and generalizability by learning from action rewards, thus eliminating the need for extensive data collection and model generalization required by supervised learning. However, these RL-based approaches still face challenges related to problem scalability, generalizability, and substantial runtime overhead in training.

\paragraph{Heuristic scheduling algorithms} Heuristic scheduling algorithms \cite{ahn2020ordering,graham1969bounds,paulin1989force,blum2003metaheuristics} play a critical role in scheduling as well. Notable examples include list scheduling \cite{graham1969bounds}, a greedy algorithm that prioritizes tasks based on a predefined order, and force-directed scheduling \cite{paulin1989force}, which aims to balance tasks and resources iteratively to achieve latency-constrained, minimum-resource scheduling. In addition, stochastic heuristic methods such as evolutionary algorithms \cite{blum2003metaheuristics,wall1996genetic} and simulated annealing \cite{van1992job} are particularly effective in escaping local optima in complex scheduling spaces. While heuristic approaches mostly focus on finding feasible solutions at low runtime costs, they often fall short of reaching the optimal solution.


\subsection{Gumbel-Softmax} 

Gumbel-Softmax is a continuous distribution on the simplex which can be used to approximate discrete samples \cite{maddison2016concrete,jang2016categorical,gumbel1954statistical}. With Gumbel-Softmax, discrete samples can be differentiable and their parameter gradients can be easily computed with standard backpropagation. 
Let $z$ be the discrete sample with one-hot representation with $k$ dimensions and its class probabilities are defined as $p_1, p_2, ..., p_k$. Then, according to the Gumbel-Max trick proposed by \cite{gumbel1954statistical}, the discrete sample $z$ can be presented by:
\vspace{-1mm}
\begin{equation}
    z = \texttt{one\_hot}(\underset{i}{\text{argmax}}[g_i + log{p_i}])
    \label{equ:z}
\end{equation}
where $g_i$ are i.i.d samples drawn from Gumbel(0, 1). Then, we can use the differentiable approximation \texttt{Softmax} to approximate the one-hot representation for $z$, i.e., $\nabla_{p} z \approx \nabla_{p} y$:
\begin{equation}
    y_i = \frac{\text{exp}((\text{log}(p_i) + g_i)/\tau)}{\sum_{i=1}^{k} \text{exp}((\text{log}(p_i) + g_i)/\tau)}
    \label{equ:y_i}
\end{equation}
where $i = 1, 2, ..., k$. The softmax temperature $\tau$ is introduced to modify the distributions over discrete levels. Softmax distributions will become more discrete and identical to one-hot encoded discrete distribution as $\tau \rightarrow 0$, while at higher temperatures, the distribution becomes more uniform as $\tau \rightarrow \infty$ \cite{jang2016categorical}. Gumbel-Softmax distributions have a well-defined gradient $\frac{\partial{y}}{\partial{p}}$ w.r.t the class probability $p$. When we replace discrete levels with Gumbel-Softmax distribution depending on its class probability, we are able to use backpropagation to compute gradients. 

Gumbel-Softmax provides solutions for the differentiation of discrete scheduling and discrete design space explorations in neural architecture search and quantization tasks~\cite{wu2019fbnet,wu2018mixed,he2020milenas,fu2021a3c,fu2021auto,baevski2019vq}.
{For instance, \cite{wang2023cardinality} leverages the Gumbel trick as well as Sinkhorn iterations for combinatorial optimization and utilizes Sinkhorn to implement the problem constraints.}
However, the study of constrained discrete search and optimization through sampling methods, such as the Gumbel-Softmax, has not been extensively explored, which is particularly critical in scheduling and many other combinatorial optimization problems.

\section{Approach}
\label{sec:approach}

We propose a novel differentiable approach 
that \textit{compactly encodes} our targeted scheduling problem defined in Section \ref{subsec:schedule}, which can also be applied to a variety of important scheduling problems on dataflow graphs. Specifically, our method is capable of modeling (1) scheduling constraints in the SDC form and (2) an objective function for resource/cost minimization, both in a differentiable manner. 
We further introduce a novel \textit{constrained Gumbel Trick}, enabling highly parallelizable scheduling optimization through a sampling-based process with gradient descent. 

The remainder of this section will describe the formulation of the targeted scheduling problem in the SDC form, detailing the (1) definition of the search space, (2) modeling of dependencies, and (3) cost metrics (optimization objectives). Afterward, we will present our differentiable approach, aligning it with these three key components.


\subsection{Differentiable SDC}\label{sec:ilp2}
With our latency-constrained min-resource scheduling problem, we intend to schedule the node set $V$ on $L$ scheduling stages while minimizing the cost objectives defined in Section \ref{sec:diff_cost}. 
Note that we allow node chaining, which means two dependent nodes can be scheduled in the same stage, but a node cannot be scheduled earlier than its predecessor.

As mentioned earlier, a critical aspect of scheduling is honoring the dependencies between nodes, which can be specified using SDC. Specifically, these dependencies are translated into integer linear inequalities, which ensure that the resulting schedule adheres to the necessary precedence and resource constraints, maintaining the integrity of the data flow. Specifically, the inequality constraints can be summarized w.r.t the edges $E$, 
\begin{equation}  
    \forall e(i,j) \in E: s_i - s_j \leq c_{ij} \quad
    \label{eq:sdc_k_2_t}
\end{equation}
\noindent where \( e(i,j) \) denotes an edge that connects node \( i \) to node \( j \). The term \( s_i \) and \( s_j \) are the 
schedule variables for nodes \( i \) and \( j \), respectively. Given that we operate under a latency constraint $L$, all schedule variables follow constraint $\leq L$. To fully encode the scheduling problem as a differentiable model, our approach first addresses the vectorization of the search space, i.e., the vectorization of SDC variables, and then handles the integer inequality constraints with differentiable modeling. 

\subsubsection{Search Space Vectorization}

Given latency constraint $L$, a vector \(p\) in \(\mathbb{R}^L\) represents the probability vector of the scheduling decision \(\overrightarrow{s}\) for a given node, and the sampled decision \(\overrightarrow{s}\) is generated via hard Gumbel-Softmax \(\overrightarrow{s}\) = GS(\(p\)). As  Equation \ref{equ:z} indicates, \(\overrightarrow{s}\) is a one-hot vector, which contains a single `1' in its \(t^{th}\) coordinate and zeros elsewhere, {indicating the variable is scheduled at the scheduling stage $t$,} while \(p\) represents the probability distribution of $t$ falling into $[0, L-1]$. 
Therefore, for any scheduled variable vectorized as \(p\) in \(\mathbb{R}^L\), its corresponding integer solution space can be defined as $t$ \(\in [0, L-1]\) and \(L\) is the latency upper bound. 

Therefore, in the context of an SDC-encoded schedule, the solution values for each variable can be defined within its vector representation, i.e., \(\overrightarrow{s_i} \in \mathbb{R}^L\), with \(\text{argmax}(\overrightarrow{s_i}) \in [0, L-1]\). The search space can be fully vectorized by defining all the schedule variables {in SDC forms and capturing their dependencies in SDC}. 
Given a DAG \(G(V, E)\), our differentiable approach will first define the schedule variables in vector representation, for all \(\overrightarrow{s_i} \in \mathbb{R}^L\), \(i \in V\). 
This vectorization establishes a bijection between each integer value and its corresponding one-hot vector. Considering all schedule variables, the search space can be effectively represented by the tensor product of these one-hot vectors. As a result, the total optimizable parameters are in \(\mathbb{R}^{|V| \times L}\).

\subsubsection{Differentiable Modeling of Inequality Constraints}

The next critical step is to ensure the dependency constraints are met using the proposed approach.  
Specifically, our differentiable scheduling aims to incorporate the dependency constraints defined in 
$E$ as input, following the integer inequalities constraints in SDC, shown in Equation \ref{eq:sdc_k_2_t}. 
To encode these inequalities in a differentiable manner, we utilize the cumulative sum (\texttt{cumsum}) function. For a schedule variable represented as a one-hot vector \( \overrightarrow{s_i} \), the transformation using \texttt{cumsum} yields \( \texttt{cumsum}(\overrightarrow{s_i}) \), converting \( \overrightarrow{s_i} \) into its cumulative sum representation. Generally, the cumulative sum of a vector \( v = [v_0, v_1, ..., v_n] \) is \( v' = [v_0, v_0 + v_1, ..., \sum_{i=0}^{n} v_i] \). For one-hot vectors, this transformation indicates the feasible space for subsequent Gumbel-softmax sampling operations.

Given an integer inequality constraint \(s_i - s_j \leq c_{ij}\), 
we express it in vector form as \( \overrightarrow{s_i} \xrightarrow{c_{ij}} \overrightarrow{s_j} \), where \(\overrightarrow{s_i}\) is the sampled discrete solution of \(s_i\), and we define a transformation \(T_{\leq}: \mathbb{R}^n \times \mathbb{R} \to \mathbb{R}^n\) for the "\(\leq\)" constraint, where $\hat{+}$ is an operator that rolls the '1' to the right in the one-hot vector by $|c_{ij}|$ position(s):  
\begin{equation}  
\mathbf{T_{\leq}}(\overrightarrow{s_i}, c_{ij}) = \texttt{cumsum}(\overrightarrow{s_i} \hat{+} {|c_{ij}|})
\end{equation}
This transformation \(\mathbf{T_{\leq}}(\overrightarrow{s_i}, c_{ij})\) effectively constrains the solution space for \(s_j\), represented as \(\overrightarrow{s_j}\).  Note that non-zero $c_{ij}$ can be used to capture additional timing constraints. 

\noindent
{\textbf{Example} -- We illustrate the differentiable integer inequality modeling using the constraint $s_0 - s_1 \leq 0$, with $s_0, s_1 \in \mathbb{R}^3$. Let the initial sampling of $\overrightarrow{s_0}$ be \([0,1,0]\), such that $s_0$ evaluates to `1`. 
Then, $T_{\leq}$ evaluates to \([0,1,1]\), which implies that if $s_0=1$, $s_1$ can only be sampled as 1 or 2, which can be confirmed by the original integer inequality constraint. 
If additional timing constraint is given, e.g., $c_{ij}=-1$, $T_{\leq}$ will be evaluated to \([0,0,1]\) by rolling $T_{\leq}$ to the right by one position, which results in $s_1=2$ to satisfies the additional timing constraint.}

Therefore, we introduce the \textit{constrained Gumbel Trick}, enabling our model to handle inequality constraints. We use \(T_{\leq}\) in conjunction with Gumbel distribution sampling to ensure that the sampling always satisfies the constraints:
\begin{equation}  
\label{eq:gumel_trick}
y'_i=\frac{\exp((\log(p_i)+g_i)/\tau)}{\sum_j \exp((\log(p_j)+g_j)/\tau)} \cdot \mathbf{T_{\leq}}, \quad g \sim \text{Gumbel}(0,1)
\end{equation}

\begin{lemma}
For the inequalities \( s_i - s_j \leq c_{ij} \), the transformation \( T_{\leq} \) ensures any sampled vector space for \( s_j \) satisfies the inequality.
\end{lemma}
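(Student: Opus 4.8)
The plan is to unwind the definitions of \(T_{\leq}\) and of the hard Gumbel-Softmax sample, and to check that the support of the resulting distribution for \(s_j\) is exactly the set of indices allowed by \(s_i - s_j \leq c_{ij}\). First I would fix a sampled one-hot vector \(\overrightarrow{s_i}\) with its \('1'\) in coordinate \(t\), i.e.\ \(s_i = t\). I would then compute \(\overrightarrow{s_i} \hat{+} |c_{ij}|\), which by definition of \(\hat{+}\) is the one-hot vector with its \('1'\) rolled to coordinate \(t + |c_{ij}|\) (taking \(c_{ij} \le 0\) in the dependency case so \(|c_{ij}| = -c_{ij}\), and handling the general sign of \(c_{ij}\) as in the worked example). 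Applying \(\texttt{cumsum}\) turns this into the 0/1 vector that is \(0\) on coordinates \(0,\dots,t+|c_{ij}|-1\) and \(1\) on coordinates \(t+|c_{ij}|,\dots,L-1\). This is the mask \(\mathbf{T_{\leq}}\).

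Next I would argue that multiplying the Gumbel-Softmax weights \(y_i\) coordinatewise by this mask, followed by the \(\text{argmax}\)/one-hot step that produces the hard sample \(\overrightarrow{s_j}\), forces \(\text{argmax}(\overrightarrow{s_j})\) to lie in \(\{t+|c_{ij}|,\dots,L-1\}\): every masked-out coordinate receives weight exactly \(0\), while every surviving coordinate receives a strictly positive weight (the softmax output is strictly positive and the mask entries there are \(1\)), so the argmax can never select a masked coordinate, and conversely any surviving coordinate has positive probability of being selected. Hence the support of \(s_j\) is exactly \(\{t+|c_{ij}|,\dots,L-1\}\). The final step is the arithmetic verification that this index set coincides with the feasible set prescribed by the constraint: \(s_j \ge t + |c_{ij}| = s_i - c_{ij}\) is equivalent to \(s_i - s_j \le c_{ij}\), so every sampled \(\overrightarrow{s_j}\) satisfies the inequality, and moreover no feasible value is excluded. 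One should also note the boundary case \(t + |c_{ij}| \ge L\), where the mask is identically zero, corresponding to the constraint being infeasible under latency \(L\); I would remark that this is handled by the problem setup (such constraints cannot arise for a graph schedulable within \(L\)).

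The main obstacle I anticipate is purely bookkeeping rather than conceptual: being careful about the sign and direction conventions of the roll operator \(\hat{+}\) versus the sign of \(c_{ij}\) (the excerpt uses \(|c_{ij}|\) and then separately discusses negative \(c_{ij}\) giving an extra rightward roll in the example), and about off-by-one issues in the \(\texttt{cumsum}\) indexing (whether coordinate \(t\) itself is included). I would resolve this by stating the indexing convention once at the start of the proof and then matching it against the worked example \(s_0 - s_1 \le 0\) with \(\overrightarrow{s_0} = [0,1,0]\) giving \(\mathbf{T_{\leq}} = [0,1,1]\), which pins down that the surviving set starts at coordinate \(t\) (equivalently, chaining is allowed, consistent with the ``a node cannot be scheduled earlier than its predecessor'' remark). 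A secondary, minor point is to make explicit that the claim is about the \emph{support} of the sampling distribution (``any sampled vector''), not about the continuous relaxation \(y'_i\) itself, since the relaxed vector is not one-hot; I would phrase the lemma's guarantee in terms of the hard sample \(\overrightarrow{s_j} = \texttt{one\_hot}(\text{argmax}(y'))\).
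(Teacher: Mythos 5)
Your proposal is correct and follows the same route the paper intends, but it supplies essentially all of the substance that the paper's own proof omits: the paper's argument is a two-sentence restatement (``$T_{\leq}$ restricts the feasible space; any sample from it satisfies the inequality via the bijection''), whereas you actually compute the mask $\texttt{cumsum}(\overrightarrow{s_i}\,\hat{+}\,|c_{ij}|)$, show that masked coordinates get weight exactly $0$ while surviving coordinates keep strictly positive softmax weight so the argmax cannot land outside the support $\{t+|c_{ij}|,\dots,L-1\}$, and verify the arithmetic equivalence $s_j \ge s_i - c_{ij} \Leftrightarrow s_i - s_j \le c_{ij}$ against the worked example. Your additional observations --- the infeasible boundary case $t+|c_{ij}| \ge L$, the distinction between the relaxed vector $y'$ and the hard one-hot sample, and the sign convention for positive versus negative $c_{ij}$ under the right-roll operator $\hat{+}$ --- address genuine gaps that the paper's proof leaves unexamined, so your version is strictly more complete than the original.
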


\begin{proof}
Consider any arbitrary constraint \( s_i - s_j \leq c_{ij} \). The transformation \( T_{\leq} \) applied to \(\overrightarrow{s_i}\) restricts the feasible vector space for \(\overrightarrow{s_j}\). Any vector \( \overrightarrow{s_j} \) sampled from this space will satisfy the inequality when converted back to integer values via the bijection.
\end{proof}

\subsection{Optimization Process and Objectives}\label{sec:diff_cost}

Targeting latency-constrained min-resource scheduling, the optimization process aims to search for the best possible scheduling solutions with the given latency and dependency constraints, guided by a loss function that minimizes resource costs.

\subsubsection{optimization process}
As discussed in Section \ref{sec:ilp2}, the latency constraint is modeled via our vectorization process of schedule variables, and the dependency constraints are ensured via differentiable inequality modeling. The core of this process is the constrained Gumbel Trick, where we employ vectorized representations of scheduling decisions. Firstly, we calculate the logits for each scheduling decision, then we incorporate these logits into the GS 
function with our constraint transformations to obtain the sampling probabilities. The process can be mathematically formulated as follows:

Let \(\overrightarrow{s}\) be the vector representation of a schedule variable. We make use of the GS function with the constrained Gumbel Trick in Equation \ref{eq:gumel_trick} to obtain the sampling probability:
\begin{equation}  
P = \text{GS}(\mathbf{T_{\leq}}(\overrightarrow{s}, {c}); \tau),
\end{equation}
where \( P \in \mathbb{R}^L \) and \( P^i \in [0, 1] \) for each \( i \in [0, L-1] \) with \( L \) being the scheduling depth upper bound. Here, \( \mathbf{T_{\leq}} \) represents the transformation for inequality constraints and \( \text{GS}(\cdot; \tau) \) is the GS function with temperature \( \tau \).

The probability of selecting a scheduling solution can be calculated by considering both the conditional probabilities under the constraints and the overall probability of the solution being feasible. For a scheduling solution \( k \), the probability \( P(i) \) for a node \( i \) can be computed as:
\begin{align}
P(i) &= P(i|k) \cdot P(k) \\
     &= P(i|k) \cdot p(cl(i))
\end{align}
where \( P(i|k) \) is the conditional probability of choosing node \( i \) given the scheduling solution \( k \), and \( p(cl(i)) \) represents the probability of the scheduling class \( cl(i) \) being feasible under the given constraints.

\begin{figure*}
    \centering
    \includegraphics[width=0.95\linewidth]{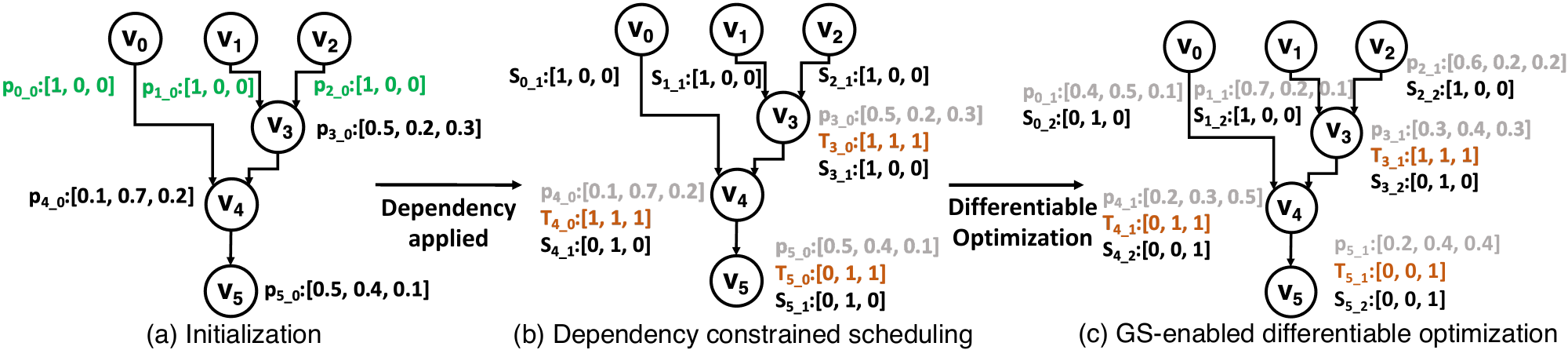}
    \caption{The implementation overview of our differentiable scheduling. $p$ indicates the probability vector for each node during GS-enabled differentiable  optimization; $T$ indicates the dependency constraints from the predecessors 
    $S$ indicates the scheduled stage for the node. (a) The search space vectorization and GS initialization. (b) The legal scheduling after applying the dependency constraints. (c) The scheduling is optimized with GS-enabled differentiable optimization.
    }
    \label{fig:framework_overview}
\end{figure*}

\subsubsection{Differentiable Cost Models}

Finally, we introduce a differentiable loss function that integrates the target objectives. As discussed in Section \ref{subsec:schedule}, we target the scheduling problem of minimizing two cost objectives associated with
the nodes and edges of the graph in the form of weights: 1) maximum memory resource utilization calculated with weights of the nodes, and 2) cross-stage communication cost with the weights of the edges. 

Specifically, we illustrate the two targeted optimization objectives and metrics using the example in Figure~\ref{fig:sdc-example}. Considering a schedule where $v_0$, $v_1$, and $v_2$ execute in the first stage, while $v_5$ executes in the last stage, the communication cost is then calculated as the sum of all data transferred between stages $I_0$ and $I_1$. As for the memory cost, peak memory refers to the maximum memory used across all stages.  Note that the cost metrics can be assessed uniquely w.r.t a given schedule.

{To enable parallelizable optimization using gradient descent} w.r.t the target objectives, we integrate a differentiable cost function {$\mathcal{L}$ based on the scheduling result.} 
To minimize memory usage under a latency constraint, we define the memory loss function \(\mathcal{L}_{e}\), which includes the entropy of scheduled nodes over \(L\) stages:
\begin{equation} 
    \mathcal{L}_{e} = -\sum_{i=0}^{L-1} \frac{N_{i}}{M}\log \frac{N_{i}}{M}
    \label{equ:loss_entropy}
\end{equation}
where \(N_{i}\) represents the memory of all nodes at the \(i\)-th stage, and \(M\) is the total memory of all nodes. Assuming uniform memory requirements for each node, \(N_{i}\) is equivalent to the number of nodes at the \(i\)-th stage, and \(M = |V|\), the total node count. Minimizing \(\mathcal{L}_{e}\) aims to evenly distribute the required memory across stages, which correlates to minimizing the peak memory resource cost. The effectiveness of this entropy-based approach for scheduling resource minimization is originally proven in~\cite{wang2010multi}.

Furthermore, to account for the minimization of communication cost, we add $\mathcal{L}_{c}$ into the loss functions. In this context, we simply formulate the communication cost as the mean of total cost over all the inter-stages, i.e.,
\begin{equation}
    \mathcal{L}_{c} = \frac{1}{\sum_{b=0}^{|E|-1}c_{b}} \sum_{i=0}^{L-2} m_i
\end{equation}
\noindent where $m_i$ is the accumulated communication cost for all edges on each inter-stage $i$, $c_{b}$ is the communication cost introduced by edge $e_b$. The final loss is then defined as
\begin{equation}  
    \mathcal{L} = \lambda \mathcal{L}_{e}+\mathcal{L}_{c}
\end{equation}
\noindent where $\lambda$ is a customizable input used to adjust the optimization ratio between the two objectives.

\paragraph{Implementation} The overview of the differentiable scheduling optimization is shown in Figure \ref{fig:framework_overview}. First, each schedule variable (per node) is initialized with the vectorized search space. More concretely, given 3 available stages, we use 3-dimensional vectors that are made differentiable with GS 
(Figure \ref{fig:framework_overview}(a)). Note that, the primary input nodes 
($v_{0}$, $v_{1}$ and $v_{2}$) are initialized with the distribution biased to the first stages in GS. 
Then, the constraints $T$ are applied to the initialized $p$ to produce the legal one-hot scheduling $S$. For example, for $v_{5}$, even though the first stage shows the highest probability in $p_{5\_0}$, its feasible search space is constrained by its predecessors 
$v_{4}$ and $v_{3}$ with $T_{5\_0}=[0, 1, 1]$ and the legal scheduling is then selected within the feasible space with the highest probability, i.e., the second stage. In each optimization iteration, in the forward path, the loss function is computed by the legal one-hot scheduling $S$ to guarantee its legalization, and the scheduling is optimized with the backward propagation through the probability vector $p$ in GS to make it differentiable. By updating the probability vector $p$, the one-hot vector \(\overrightarrow{s}\) for variable scheduling given by \(\overrightarrow{s}\) = GS(\(p\)) will be updated accordingly in the next iteration to realize the differentiable iterative optimization. 

\begin{algorithm}[!thb]
   \caption{Differentiable  Scheduling}
   \label{alg:training}
\begin{algorithmic}[1]
   \REQUIRE Graph $G(V,E)$; Targeted latency $L$
   \FOR{each $n \in V$}
   \STATE $W_n \leftarrow$ Initialize($L$)
   \STATE $W \leftarrow W \cup \{W_n\}$
   \ENDFOR
   \FOR{$i=1$  to \textit{num\_epochs}}
   \STATE $S^i \leftarrow [\;]$
   \FOR{each node $t$ in TopologicalOrder($V$)}
   \STATE $GS \leftarrow$ GumbelSoftmax($W_t^i$)
   \STATE $GS_{E_t} \leftarrow$ ApplyConstraints($GS, E_t$)
   \STATE $S_t^i \leftarrow$ ExtractOneHot($GS_{E_t}$)
   \STATE $S^i \leftarrow S^i \cup \{S_t^i\}$
   \ENDFOR
   \STATE $\mathcal{L} \leftarrow$ $\mathcal{L}$($S^i$); $\nabla W^i \leftarrow \nabla \mathcal{L}(W^i)$
   \STATE $W^{i+1} \leftarrow$ UpdateParameters($W^i, \nabla W^i, \eta$)
   \ENDFOR
\end{algorithmic}
\end{algorithm}

 We illustrate the training process of our approach in Algorithm \ref{alg:training} for a given graph $G(V,E)$ with a latency bound $L$. 
Initially, the weights $W_n$ are initialized for each node $n \in V$ and collectively stored in $W$. The algorithm proceeds for a specified number of epochs. In each epoch, an empty schedule $S^i$ is created. The nodes are processed in a topological order. For each node $t$, a Gumbel-Softmax distribution is computed using the current weights $W_t^i$. {Constraints specific to the edges $E_t$ which start from node $t$ are applied to this distribution by \textit{constrained Gumbel Trick}, and a one-hot encoded vector is extracted to represent the schedule of node $t$, which is then added to the schedule $S^i$.} The loss $\mathcal{L}$ is computed based on the current schedule $S^i$, and the gradient $\nabla W^i$ of the loss w.r.t. the weights is calculated. Finally, the weights are updated using the computed gradient and a learning rate $\eta$, resulting in updated weights $W^{i+1}$ for the next epoch. {After training all epochs, a final schedule $S$ is output by the algorithm. }

\begin{figure*}[htbp]
\centering

\begin{subfigure}[b]{0.475\textwidth}
    \includegraphics[width=0.95\textwidth]{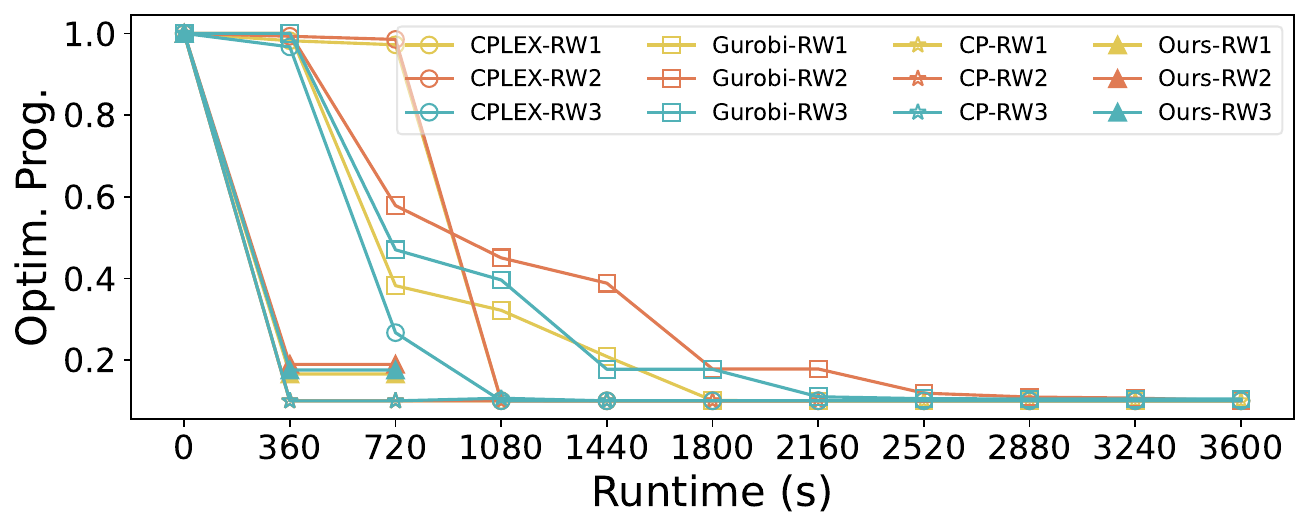}
    \caption{RW 1--3}
    \label{fig:random_1-3}
\end{subfigure}
\begin{subfigure}[b]{0.475\textwidth}
    \includegraphics[width=0.95\textwidth]{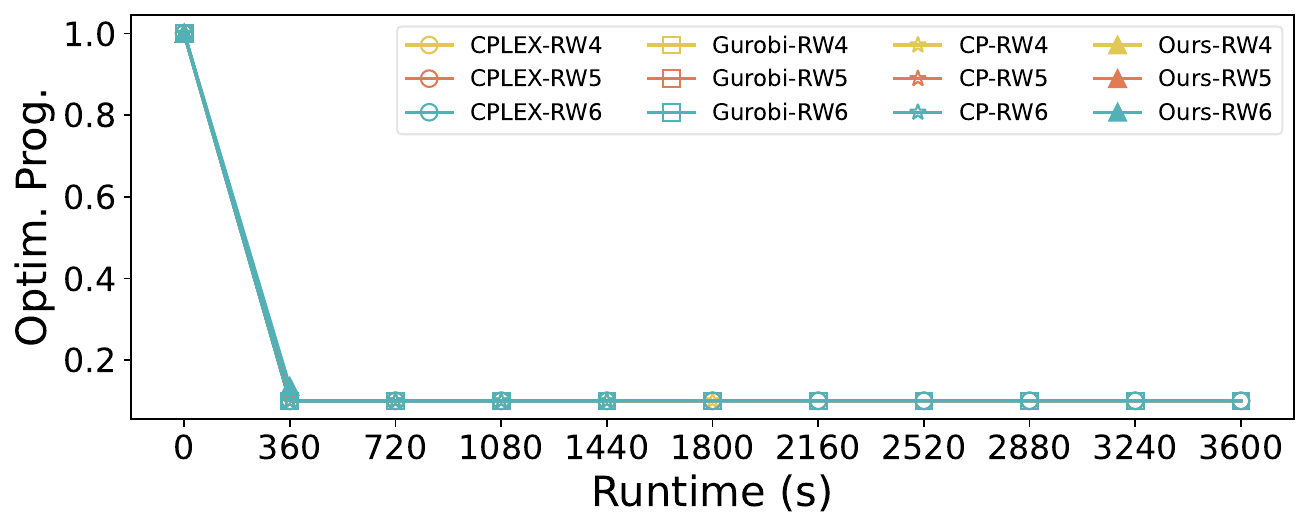}
    \caption{RW 4--6}
    \label{fig:random_4-6}
\end{subfigure}

\begin{subfigure}[b]{0.475\textwidth}
    \includegraphics[width=0.95\textwidth]{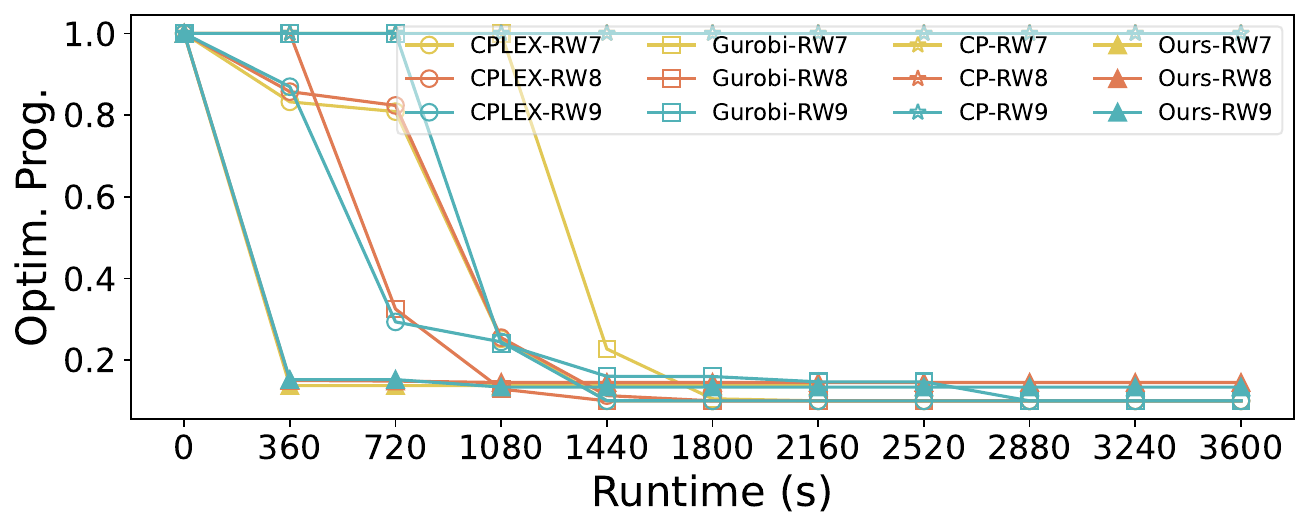}
    \caption{RW 7--9}
    \label{fig:random_7-9}
\end{subfigure}
\begin{subfigure}[b]{0.475\textwidth}
    \includegraphics[width=0.95\textwidth]{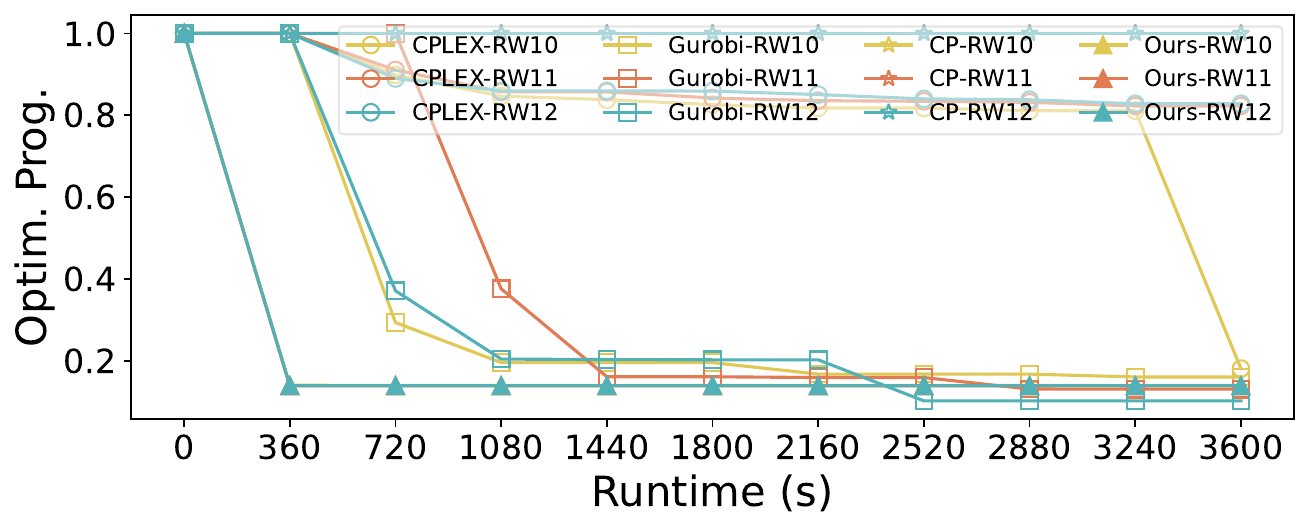}
    \caption{RW 10--12}
    \label{fig:random_10-12}
\end{subfigure}
\caption{Performance comparisons with random workloads. Baseline results are SDC scheduling solved by commercial SOTA CPLEX, Gurobi, and CP-SAT solvers.}
\label{fig:random_results}
\end{figure*}

\section{Experiment}
\label{sec:experiment}

\textbf{Benchmarks and baselines} Our experiments are conducted on specialized scheduling problems in GPU-based circuit simulation \cite{zhang2022gatspi}. Due to the nature of Boolean circuits and GPU runtime, the efficiency of simulation workload runtime is highly correlated with data transfer and GPU memory usage. Specifically, \cite{zhang2022gatspi} pointed out that the critical challenge in scheduling these workloads is to minimize communication overhead and avoid memory bandwidth bottlenecks. 
While the proposed approach can be evaluated with classic high-level synthesis benchmarks (e.g., MachSuite \cite{reagen2014machsuite} and Rosseta \cite{zhou2018rosetta}) or neural network computation graphs \cite{yin2023accelerating,yin2023respect,steiner2022olla}, we did not explicitly evaluate those benchmarks due to the fact they are trivial in reaching the optimal using the existing SOTA solvers\footnote{See our discussion on limitations in Section \ref{sec:limitation}.}. Our GPU workloads/graphs are derived using six designs from the EPFL Benchmark Suite~\cite{amaru2015epfl}, alongside baseline SDC+LP formulation solved by the SOTA commercial solvers, CPLEX~\cite{cplex2009v12} and Gurobi~\cite{gurobi}, as well as open-source tool CP-SAT solver~\cite{cpsatlp}. Note that these are non-traditional GPU workloads in scheduling. 
We extend the benchmarks by adding derived GPU computational graphs of simulated technology-mapped designs using 7nm ASAP technology library~\cite{xu2017standard}.  
Additionally, we add twelve synthetic random workloads (RW), all summarized in Table \ref{tab:design-spec}. We predefined the ratio of LP and the factor of our method, setting $\mathcal{R} = \lambda = 100$ for all EPFL designs and $\mathcal{R} = \lambda = 10$ for all synthetic workloads, respectively. We set the targeted latency to be $L=10$ for the experiments. All experiments were conducted using an an Intel\textregistered Xeon\textregistered Gold 6418H CPU and NVIDIA RTX\texttrademark 4090 GPU. A timeout of 3600 seconds was enforced for all designs across all methods.



\begin{table}[ht]
\centering
\scriptsize
\caption{Number of nodes and depths for selected designs. \textit{*RW}: Random workload. (M): Mapped.}
\label{tab:design-spec}
\begin{tabular}{|c|cc||c|cc|}
\hline
\textbf{Design} & \textbf{$|V|$} & \textbf{\#Depth} & \textbf{Design} & \textbf{$|V|$} & \textbf{\#Depth} \\ 
\hline
Adder & 1661 & 258 & RW1 & 949 & 15 \\ 
\hline
Adder (M) & 1830 & 89 & RW2 & 941 & 16 \\ 
\hline
Barrel shifter & 3734 & 15 & RW3 & 929 & 16 \\ 
\hline
Barrel shifter (M) & 2265 & 10 & RW4 & 810 & 9 \\ 
\hline
i2c controller & 1793 & 23 & RW5 & 819 & 8 \\ 
\hline
i2c controller (M) & 1221 & 11 & RW6 & 829 & 8 \\ 
\hline
Max & 4019 & 290 & RW7 & 4087 & 10 \\ 
\hline
Max (M) & 3493 & 97 & RW8 & 4063 & 9 \\ 
\hline
Square & 18742 & 253 & RW9 & 4086 & 8 \\ 
\hline
Square (M) & 18389 & 96 & RW10 & 8058 & 9 \\ 
\hline
Voter & 15761 & 73 & RW11 & 8192 & 11 \\ 
\hline
Voter (M) & 20058 & 39 & RW12 & 8193 & 9 \\ 
\hline
\end{tabular}
\end{table}

\subsection{Performance Comparison}

\begin{figure*}[t]
\centering

\begin{subfigure}[b]{0.475\textwidth}
    \includegraphics[width=0.95\textwidth]{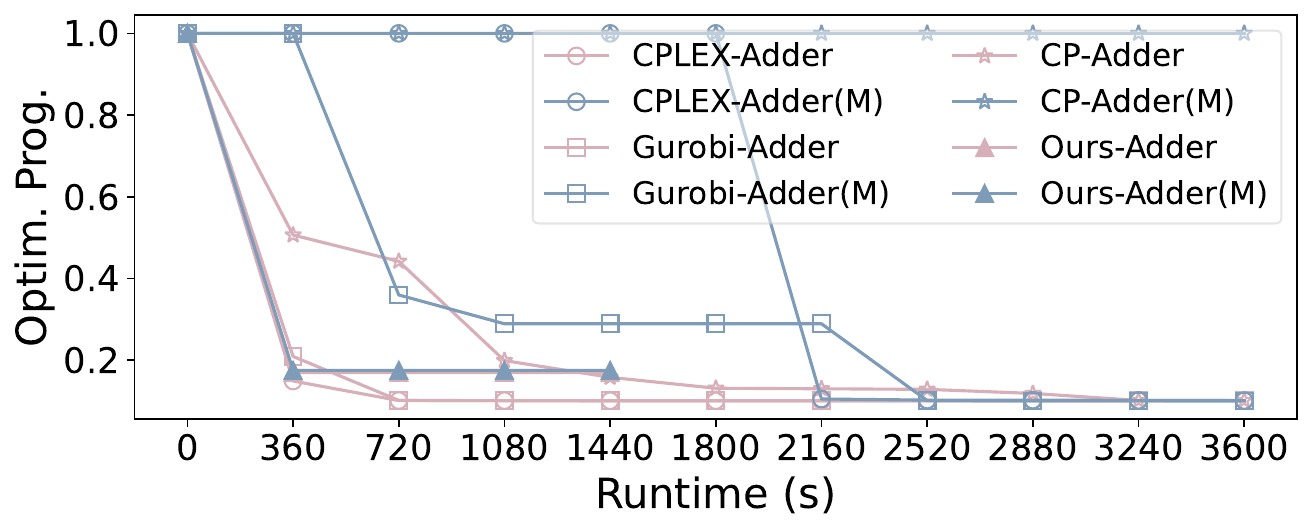}
    \caption{Adder}
    \label{fig:epfl_figs_adder}
\end{subfigure}
\begin{subfigure}[b]{0.475\textwidth}
    \includegraphics[width=0.95\textwidth]{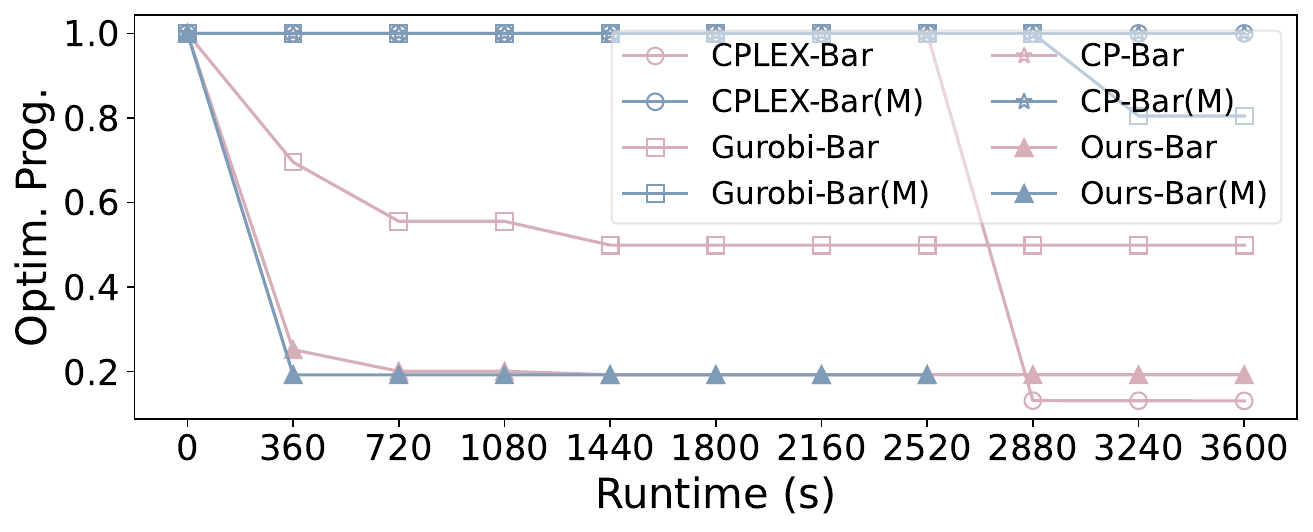}
    \caption{Barrel shifter}
    \label{fig:epfl_figs_bar}
\end{subfigure}

\begin{subfigure}[b]{0.475\textwidth}
    \includegraphics[width=0.95\textwidth]{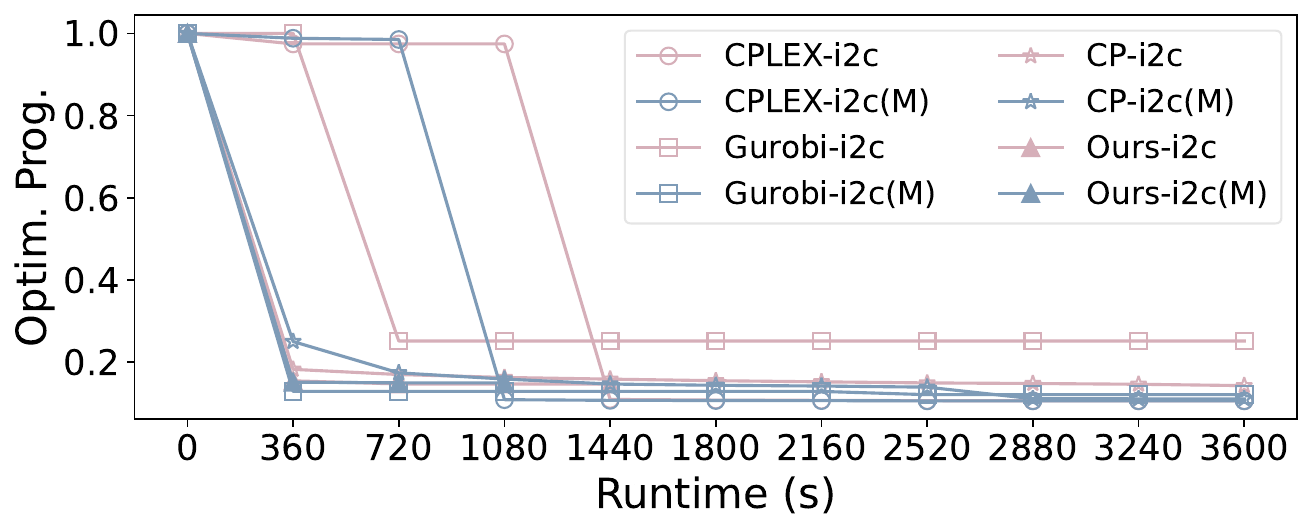}
    \caption{i2c controller}
    \label{fig:epfl_figs_i2c}
\end{subfigure}
\begin{subfigure}[b]{0.475\textwidth}
    \includegraphics[width=0.95\textwidth]{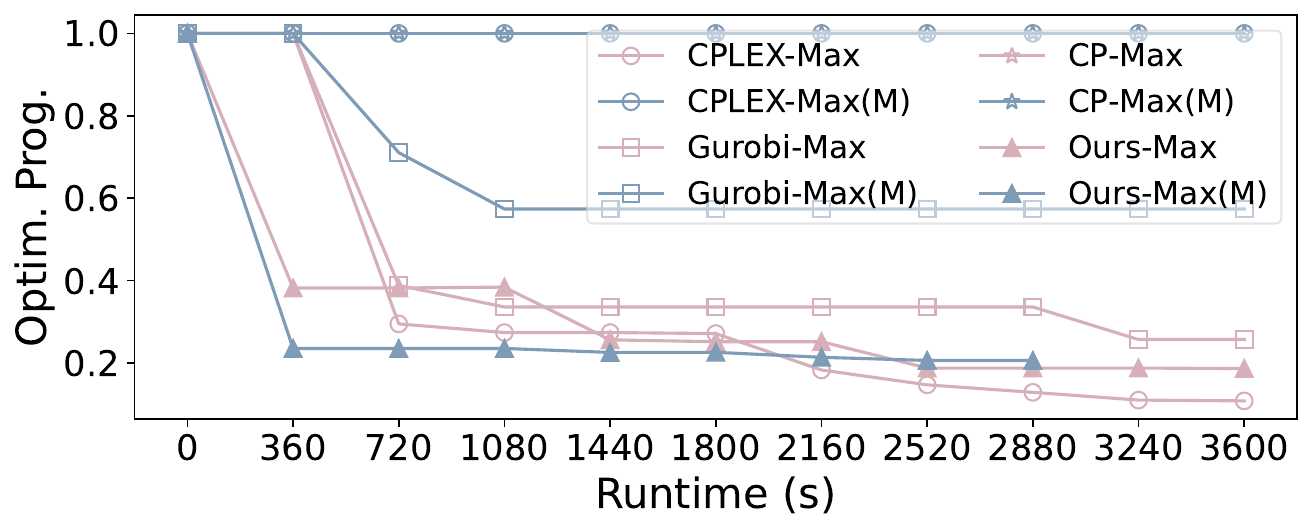}
    \caption{Max}
    \label{fig:epfl_figs_max}
\end{subfigure}
\begin{subfigure}[b]{0.475\textwidth}
    \includegraphics[width=0.95\textwidth]{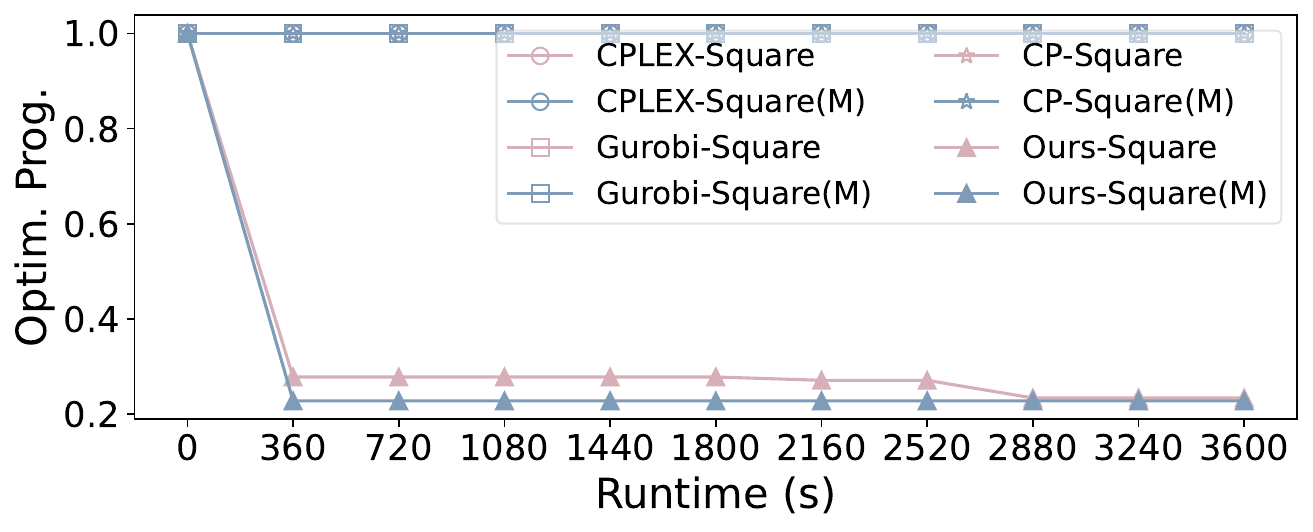}
    \caption{Square}
    \label{fig:epfl_figs_square}
\end{subfigure}
\begin{subfigure}[b]{0.475\textwidth}
    \includegraphics[width=0.95\textwidth]{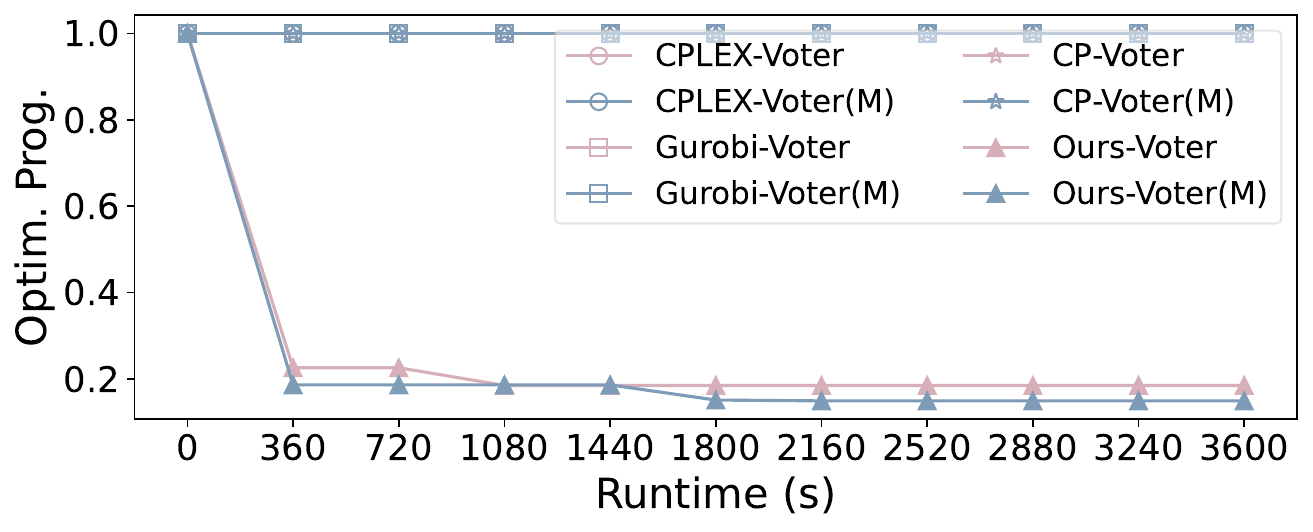}
    \caption{Voter}
    \label{fig:epfl_figs_voter}
\end{subfigure}
\caption{Performance comparisons with workloads built for EPFL benchmarks. Baseline results are SDC scheduling solved by commercial SOTA CPLEX, Gurobi, and CP-SAT solvers.}
\label{fig:epfl_figs}
\end{figure*}

\begin{figure*}[htbp]
\centering
\begin{subfigure}[b]{0.33\textwidth}
    \includegraphics[width=0.95\textwidth]{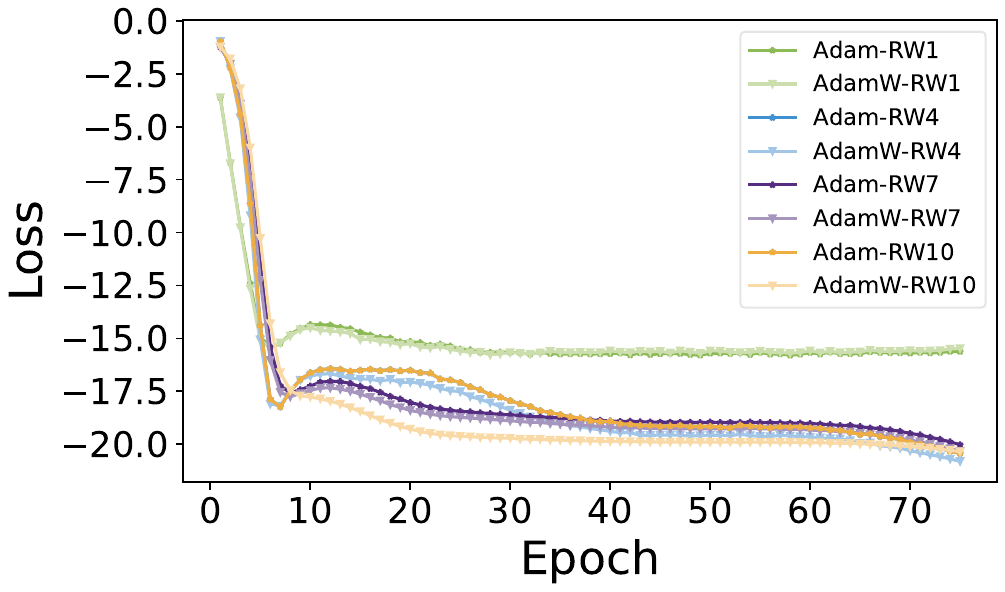}
    \caption{Selected Random workloads Loss}
    \label{fig:loss_figs_rand}
\end{subfigure}
\begin{subfigure}[b]{0.33\textwidth}
    \includegraphics[width=0.95\textwidth]{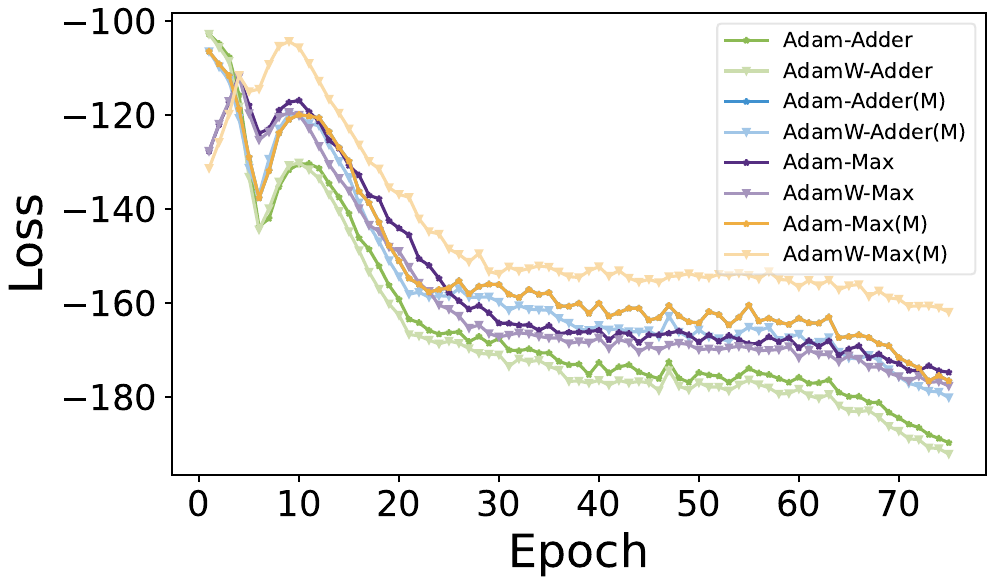}
    \caption{Selected EPFL designs Loss}
    \label{fig:loss_figs_epfl1}
\end{subfigure}
\begin{subfigure}[b]{0.33\textwidth}
    \includegraphics[width=0.95\textwidth]{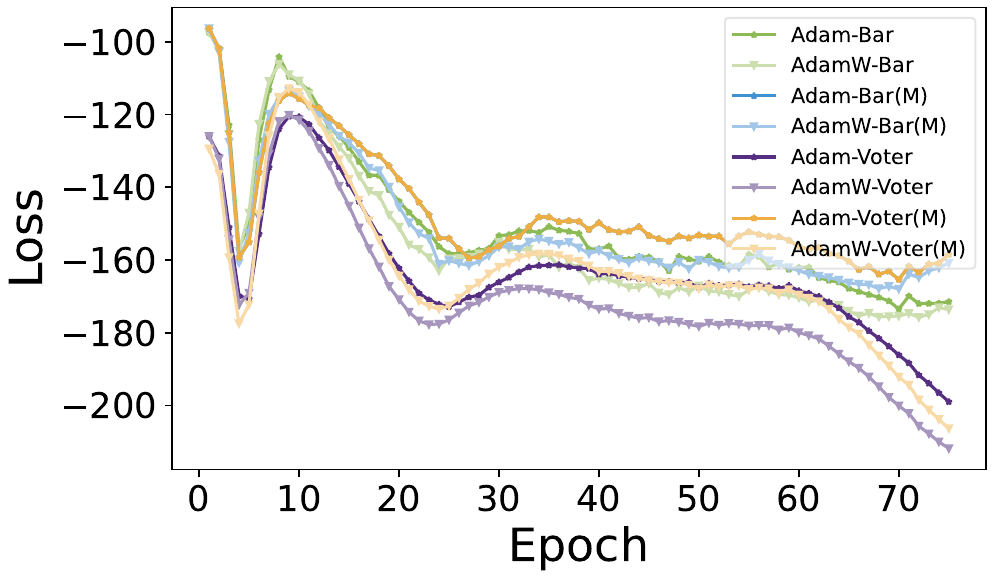}
    \caption{Selected EPFL designs Loss}
    \label{fig:loss_figs_epfl2}
\end{subfigure}
\caption{Loss Function Evaluation with selected Random workloads and EPFL designs.}
\label{fig:loss_figs}
\end{figure*}

We established 11 result sampling points at intervals of 360 seconds, ranging from 0 to 3600 seconds, to implement a consistent timeout across all designs. To ensure a fair comparison among the three methods, we employed optimization progress as a normalized factor. Initially, the objective value for all methods is set to $1.0$. During optimization, as the tentative objective value decreases, we capture the current best objective value at each sampling point. This value is then expressed as a ratio relative to the initial point, providing a normalized measure of progress. 
\paragraph{Evaluation on Synthetic benchmarks} 
As illustrated in Figure~\ref{fig:random_results}, we grouped three workloads with similar setups into one subfigure to highlight the similarity in optimization behavior. We observe that our method is particularly effective in optimizing the scheduling solution at early stages, regardless of graph size and density. For instance, as shown in Figures~\ref{fig:random_1-3} and \ref{fig:random_4-6}, our method converges within the initial 360 seconds for both denser workloads (RW 1-3) and less dense workloads (RW 4-6). In contrast, CPLEX and Gurobi barely initiate the optimization process for denser workloads and only achieve comparable performance for sparser workloads. Additionally, Figures~\ref{fig:random_7-9} and \ref{fig:random_10-12} show that our method maintains a similar convergence speed within the first 360 seconds, whereas CPLEX and Gurobi experience nearly a 3$\times$ decrease in performance when the workload scale increases from 5000 to 10000. {Meanwhile, CP-SAT shows its superiority in Figure~\ref{fig:random_1-3}, but its performance degrades sharply and fails to produce any optimization when solving larger and denser workloads, as shown in Figure~\ref{fig:random_7-9} and \ref{fig:random_10-12}.} This demonstrates the robustness and stability of our method in scalability.

\paragraph{Evaluation on EPFL Benchmarks} 
Following the observations from the synthetic workloads, we extended our verification to real-world designs. Similar to our previous approach, we grouped the original design and its corresponding mapped design in a single subfigure, due to their shared graph characteristics. Consistent with our initial observations, our method consistently exhibits a similar convergence trend between the original and mapped designs, across a range of graph densities. For instance, as demonstrated in Figure~\ref{fig:epfl_figs_bar}, while CPLEX and Gurobi manage to perform adequately on the original (sparse) designs, they exhibit nearly a 2$\times$ performance degradation on the mapped (dense) designs during most stages of the total timeout period. Furthermore, with very large designs such as those depicted in Figures~\ref{fig:epfl_figs_square} and \ref{fig:epfl_figs_voter} for Square and Voter respectively, CPLEX and Gurobi barely initiate optimization within the 3600 seconds timeout, whereas our method achieves convergence to desirable points early on (within a 360 seconds timeout). {As for CP-SAT, it only achieves comparable performance on very limited designs, as shown in Figure 4a and 4c, which are both represented as graphs with smaller and sparser scales. However, it fails to initiate the optimization for the rest of the tested designs.} This underscores the advantages of our method in handling large and complex scheduling cases. 


\subsection{Loss Function Evaluation}



Considering that our method demonstrates consistent early convergence across a diverse range of workloads and designs, we focus our investigation on the initial 75 epochs to illustrate the effectiveness of the formulated loss function. Given the similarities among the workloads, only a representative subset has been selected to accommodate page constraints while ensuring a comprehensive analysis.

As depicted in Figure~\ref{fig:loss_figs_rand}, for the selected synthetic workloads, fast convergence of the loss function is observed during the first 30 epochs, independent of graph size. This rapid convergence indicates the robustness of our method across varying data sizes. Similarly, for the selected EPFL designs, as illustrated in Figures~\ref{fig:loss_figs_epfl1} and \ref{fig:loss_figs_epfl2}, the convergence pattern remains consistent, showcasing rapid convergence within the first 75 epochs. An initial instability observed during the first 10 epochs can be attributed to the necessity of a warm-up phase for the Gumbel-Softmax mechanism employed in discrete sampling by our method. This brief period of instability is typical and expected as the model adjusts to the discrete nature of the data.
Furthermore, we explored the impact of different weight decay settings by transitioning from the Adam optimizer to AdamW. Upon examining Figure~\ref{fig:loss_figs}, no consistent advantage is observed in the loss convergence trends between the two optimizers. This observation suggests that the optimal choice of weight decay may be design-specific, indicating further investigation. 

\subsection{Limitations and Discussion}
\label{sec:limitation}

While our method exhibits rapid convergence across a variety of designs, our analysis has unveiled certain limitations for future directions. First, we note that in some specific cases, the optimization quality achieved by our approach falls short when compared to the benchmarks set by established solvers such as CPLEX, Gurobi, and CP-SAT. This discrepancy is particularly evident in designs of a smaller scale or those characterized by sparse graph structures. This observation suggests that our method might benefit from enhanced strategies tailored to these specific problem characteristics, possibly through refined optimization techniques or algorithmic adjustments that better leverage the properties of sparsity and scale. 

Second, we observe that while the problem is less complex (trivial), i.e., the problems can be solved very effectively by solving SDC+LP models using LP/CP solvers, our approach does not offer much advantage. Furthermore, we empirically observe that the complexity of the targeted problem is highly associated with the density of the graph $G(V,E)$. As the density of $G(V,E)$ increases, our proposed approach will advance further in the runtime-quality frontier.

Thirdly, our empirical findings indicate instances of 'overfitting' in the loss convergence beyond certain epochs, as illustrated in Figure \ref{fig:loss_figs_epfl2}, which could limit the ability to optimize the objectives. 
Addressing this challenge might involve exploring advanced regularization methods or adaptive stopping criteria that preemptively halt training before overfitting occurs. Furthermore, these limitations not only highlight areas for improvement but also underscore the importance of developing more nuanced optimization methods capable of adapting to varying problem scales and complexities. Future work could focus on devising novel approaches that explicitly account for the structural characteristics of the problem domain, thereby enhancing the robustness and effectiveness of the method across a broader spectrum of scheduling application scenarios.

\vspace{-1.5mm}
\section{Conclusion}
\label{sec:conclusion}
 In this work, we propose an end-to-end differentiable formulation for combinatorial and scalable scheduling, utilizing model-free dataless auto-differentiation with customized objective functions. By harnessing GPU capabilities, our experimental results demonstrate substantial performance improvements over SOTA methods solved with commercial and open-source solvers such as CPLEX, Gurobi, and CP-SAT. Our limitations point towards the necessity for further refinement, particularly in achieving more competitive optimization outcomes and addressing overfitting issues in loss convergence. It potentially broadens the applicability and efficiency of our method in solving complex scheduling problems and other discrete combinatorial problems.

\newpage
\typeout{} 
\section*{Acknowledgement}
This work is supported in part by National Science Foundation (NSF) awards \#2047176, \#2019306, \#2019336, \#2008144, \#2229562, \#2403134, \#2403135, and ACE, one of the seven centers in JUMP 2.0,  a Semiconductor Research Corporation (SRC) program sponsored by DARPA.

\section*{Impact Statement}
This paper aims to advance the field of Machine Learning through the development of a novel differentiable combinatorial scheduling algorithm. Our work has many potential societal implications, including enhanced efficiency over industrial SOTA methods. Finally, this work does not raise any ethical concerns that need to be addressed at this time.

\bibliography{icml2024/ref.bib}
\bibliographystyle{icml2024}


\end{document}